\documentclass{article}
\PassOptionsToPackage{sort&compress}{natbib}
\usepackage[final]{corl_2026} %
\usepackage[numbers]{natbib}
\usepackage{multicol}
\usepackage{subfigure}
\usepackage{xcolor}
\usepackage{times}
\usepackage{epsfig}
\usepackage{amsmath}
\usepackage{amssymb}
\usepackage{mathrsfs}
\usepackage[ruled,vlined,linesnumbered]{algorithm2e}
\usepackage{wrapfig}
\usepackage{microtype}
\usepackage{multirow}
\usepackage{lineno}
\usepackage{adjustbox}
\usepackage{booktabs}
\usepackage{subcaption}
\usepackage{amsthm}

\newcommand{\hh}{\mathbf{h}}

\newcommand{\vv}{\mathbf{v}}
\newcommand{\kk}{\mathbf{k}}

\newcommand{\vel}{\mathbf{q}}

\newcommand{\rr}{\mathbf{r}}

\newcommand{\WW}{\mathbf{W}}

\newcommand{\zz}{\mathbf{z}}

\newtheorem{theorem}{Theorem}
\newtheorem{lemma}{Lemma}

\makeatother
\DeclareMathOperator*{\argmin}{arg\,min}

\title{Collaborative Planning for UAV-UGV Teams  \\ under Operational Constraints via \\ Masked Error-bounded Neuro-Symbolic Diffusion}

\author{
    \textbf{Zihao Deng}$^{1}$, 
    \textbf{Qianhuang Li}$^{1}$, 
    \textbf{Maggie Wigness}$^{2}$, \\
    \textbf{John Rogers}$^{2}$\textbf{,} 
    \textbf{Donghyun Kim}$^1$\textbf{,} 
    \textbf{and Hao Zhang}$^1$\vspace{2pt}\\
    $^1$ University of Massachusetts Amherst \\
    $^2$ U.S. Army DEVCOM Army Research Laboratory
}

\begin{document}
\addtocontents{toc}{\protect\setcounter{tocdepth}{-1}}
\maketitle

\begin{wrapfigure}{R}{0.479\textwidth}
\vspace{-13.5pt}
\centering
\includegraphics[width=0.479\textwidth]{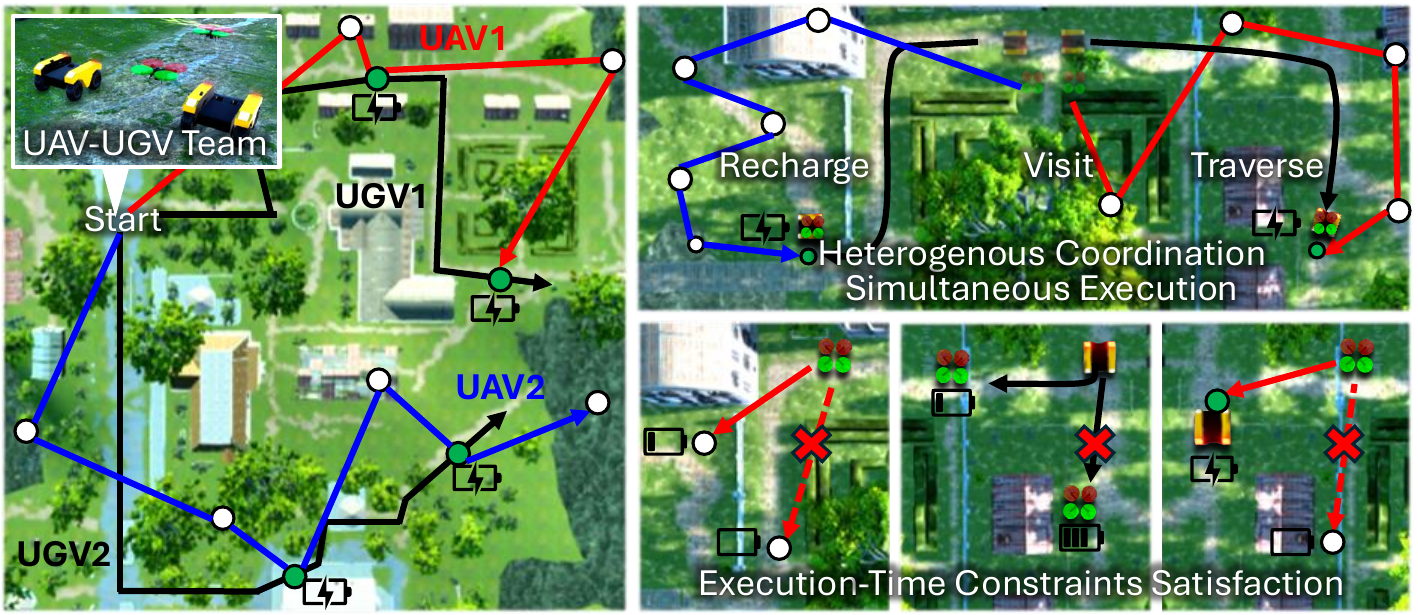}
\vspace{-13.5pt}
\caption{As a collaborative team, traversability-constrained UGVs serve as mobile recharging stations for energy-constrained UAVs, enabling parallel operation while satisfying energy, traversability, and recharging constraints.}
\label{fig:motivation}
\vspace{-10pt}
\end{wrapfigure}
\begin{abstract}
Collaborative planning under operational constraints is essential for heterogeneous robot teams in large-scale missions.
Unmanned Aerial Vehicles (UAVs) provide rapid area coverage but are limited by onboard energy, while Unmanned Ground Vehicles (UGVs) can serve as mobile recharging stations yet are constrained by road-network traversability.
Effective collaboration must enforce operational constraints during execution while enabling parallel, coordinated actions.
Masked Diffusion Models (MDMs) have shown promise through token unmasking but have two key limitations: entropy-guided scheduling assumes distributional alignment and reliance on differentiable constraints for diffusion-time optimization.
We propose \textit{Masked Error-bounded Neuro-Symbolic Diffusion} (MEND), which formulates collaborative planning as a conditional masked generative process
to generate context-aware action sequences for UAV-UGV teams.
MEND introduces a neuro-symbolic diffusion paradigm that constrains conditional masked diffusion policies at execution by reweighting the learned action distribution with a logic-induced admissibility indicator, thereby enforcing compliance with operational constraints.
MEND introduces an error-bounded inference scheduling at execution that constructs the unmasking schedule by selecting positions with minimal predictive entropy over the constraint-aware action distribution, enhancing collaborative planning optimality under operational constraints.
Extensive experimental results in simulation, on real-world maps, and with physical robot teams demonstrate that MEND advances UAV-UGV collaborative planning under operational constraints.
More details are available on the website: \href{https://hcrlab.gitlab.io/project/mend/}{https://hcrlab.gitlab.io/project/mend}.
\end{abstract}

\keywords{Heterogeneous Multi-Robot Planning, Neuro-Symbolic Diffusion.}

\section{Introduction}

Collaborative planning enables multi-robot systems to coordinate actions and share resources for interdependent tasks, supporting parallel operation \cite{pinciroli2012argos}, scalability \cite{chen2024scalable}, and robustness \cite{gao2023collaborative}.
In applications such as search and rescue \cite{kashyap2025simulations, cai2022multi, queralta2020collaborative, luo2011multi} and environmental monitoring \cite{asadzadeh2022uav, liu2022uav, fascista2022toward, eskandari2020meta}, robot teams are often heterogeneous, comprising platforms with complementary capabilities and constraints. For example, Unmanned Aerial Vehicles (UAVs) provide rapid area coverage but are limited by onboard energy, whereas Unmanned Ground Vehicles (UGVs) offer substantial power reserves and enable in-mission UAV recharging, yet are constrained by road-network traversability.
Consequently, collaborative planning must explicitly enforce operational constraints at execution time.
To ensure efficiency, multi-robot systems must execute plans in parallel with synchronized actions (Figure~\ref{fig:motivation}).

\color{black}

Optimization-based methods, e.g., mixed-integer programming~\cite{manyam2019cooperative, di2017last, climaco2021branch}, and symbolic planners, e.g., based on First-Order Logic (FOL)~\cite{correa2024lifted, lesperance2024abstraction, vazquez2022scheduling}, provide optimal solutions and constraint satisfaction but scale poorly due to combinatorial search. 
Reinforcement learning methods~\cite{mondal2025coordinate} often assume greedy recharging, simplify robot motion, or encode constraints only during training. 
Diffusion models excel in trajectory planning~\cite{chi2023diffusionvisuomotor, carvalho2023motion, xian2023chaineddiffuser, fang2024dimsam}, while Masked Diffusion Models (MDMs) achieve stronger results
via token unmasking \cite{kitouni2024factorization,kim2025any}.
MDMs typically employ entropy-guided inference scheduling~\cite{kim2025train, chen2025optimal} to improve sampling quality. 
However, they assume alignment between the learned and true distributions; when generated actions violate constraints, entropy estimates are distorted, 
leading to suboptimal solutions that breach operational constraints in collaborative planning.
Recent work extends MDMs to constrained generation via classifier-guidance \cite{schiff2024simple} and neuro-symbolic constraint enforcement during sampling \cite{christopher2025neuro, cardei2025constrained}. However, a theoretical limitation is that they rely on gradient-based optimization over differentiable constraints in sampling, whereas operational constraints are often discrete, non-differentiable, and not preserved by standard diffusion dynamics.

We propose \textit{Masked Error-bounded Neuro-Symbolic Diffusion} (\textbf{MEND}), a theoretical approach for collaborative planning in heterogeneous teams under operational constraints. 
MEND represents interdependent operational contexts as a graph,
and formulates collaborative planning as a conditional masked generative process
to generate context-aware action sequences for UAVs and UGVs.
MEND introduces a neuro-symbolic diffusion paradigm that constrains conditional masked diffusion policies at execution time by reweighting the learned action distribution with a FOL-induced admissibility indicator, thereby enforcing compliance with operational constraints at execution.
MEND also introduces an error-bounded inference scheduling strategy at execution that constructs the unmasking schedule by selecting positions with minimal predictive entropy over the constraint-aware action distribution, enhancing collaborative planning optimality under operational constraints.
The joint action sequences generated by MEND are executed by UAVs and UGVs simultaneously, 
which enables parallel operation of heterogeneous robots while satisfying operational constraints.

Our contribution is MEND, a novel and principled learning approach for masked error-bounded neuro-symbolic diffusion, 
to enable collaborative planning for UAV-UGV teams under operational constraints. 
MEND introduces two key novelties:
MEND introduces two key novelties:
\begin{itemize}
    \item \textbf{Execution-time masked neuro-symbolic diffusion} that constrains conditional masked diffusion policies via \textit{neuro-symbolic reasoning}, reshaping the learned action distribution into a constraint-aware distribution using a logic-induced admissibility indicator. 
    This enables the satisfaction of non-differentiable constraints that cannot be optimized by gradient-based diffusion updates, thereby enforcing operational constraints for UAV-UGV teams.
    \item \textbf{Execution-time error-bounded inference scheduling}, which guarantees that predictive entropy computed under the constraint-aware action distribution incurs a provably smaller sampling error relative to the true distribution.
    This reduces misalignment between the learned and true action distributions, thereby improving collaborative planning optimality under operational constraints.

\end{itemize}

\color{black}

\section{Related Work}\label{sec:related}
\paragraph{Collaborative Planning for Multi-Robot Systems}

Optimization-based methods formulate UAV-UGV planning as NP-hard routing problems. Mixed Integer Programming (MIP) \cite{murray2015flying, manyam2019cooperative, di2017last, climaco2021branch}, dynamic programming \cite{liang2024algorithm, kool2022deep, lera2022dynamic}, and heuristic search methods \cite{arnold2019knowledge, gmira2021tabu, yaugmur2021multi, maini2015cooperation, ropero2019terra, mondal2023optimizing} can compute optimal or near-optimal solutions, but their computational cost limits scalability and real-time deployment.

Symbolic planning employs predicate-based representations for constraint-guaranteed reasoning. Prior work uses action language BC \cite{jiang2019multi, zhang2017multirobot}, Planning Domain Definition Language (PDDL) \cite{jovan2023adaptive, mateu2024multi}, Temporal Logic (TL) \cite{sun2022multi, sewlia2025maps2}, and Backus-Naur Form (BNF) \cite{zhang2012iq} to model mission structures and operational constraints in multi-robot systems. These formalisms are largely grounded in First-Order Logic (FOL) \cite{correa2024lifted, lesperance2024abstraction, vazquez2022scheduling}, enabling quantified reasoning for constraint-aware multi-robot planning.

Reinforcement Learning has been applied to routing tasks \cite{kool2018attention, zhang2023coordinated, fuertes2023solving} with energy-constraints \cite{betalo2025dynamic, fan2022deep, brotee2024optimizing}, combined with Graph Learning to encode context \cite{prates2019learning, luo2022graph, boffa2022neural}. 
Neuro-symbolic approaches have emerged for both single-robot \cite{wang2024imperative} and multi-robot systems \cite{wang2025neuro, bai2024twostep, zhang2025lamma, guo2025one} for constraint satisfaction.
Energy-constrained UAVs with UGV support have been explored \cite{mondal2025coordinate}.
However, these methods are often limited by greedy recharging strategies, simplify UGV traversal as straight-line trajectories, and enforce constraints during training rather than providing execution-time guarantees.

\paragraph{Masked Diffusion Models.}
Diffusion models have shown promise in trajectory planning \cite{chi2023diffusionvisuomotor, carvalho2023motion, xian2023chaineddiffuser, fang2024dimsam}.
In discrete domains, masked diffusion models (MDMs) achieve competitive results in task and maze planning \cite{nie2024scaling, gong2024scaling, kitouni2024factorization, ye2024beyond, kim2025any}.
Recent work extends MDMs to constrained generation via classifier-guidance and neuro-symbolic reasoning \cite{schiff2024simple, christopher2025neuro, cardei2025constrained}.
However, these approaches rely on differentiable constraints optimized during diffusion via gradient-based updates, whereas operational constraints are often discrete, non-differentiable, and not preserved by standard diffusion dynamics.

MDMs support flexible inference schedules, whose choice strongly affects sampling quality, as random unmasking can lead to error propagation \cite{shi2024simplified}.
Prior work improves sampling via variable-length generation \cite{kim2025any}, caching \cite{hayakawa2025demystifying}, and entropy-guided scheduling \cite{zheng2024masked, kim2025train}, with recent theory relating optimal scheduling to predictive entropy \cite{chen2025optimal}.
However, existing entropy-guided scheduling assumes alignment between the learned and true data distributions; when constraints are violated, entropy estimates become distorted, resulting in biased unmasking schedules in practice.

\section{Preliminaries}

\textbf{Masked Diffusion Models} (MDMs) \cite{sahoo2024simple} define a discrete-time Markov chain
$a_0 \rightarrow a_1 \rightarrow \cdots \rightarrow a_T$, where $a_0=(a_0^1,\dots,a_0^L)\in\mathcal{A}^L$ is the ground-truth sequence of length $L$ and
$a_t\in(\mathcal{A}\cup\{m\})^L$ for $t\in\{0,\dots, T\}$, with vocabulary $\mathcal{A}$ and mask token $m$.
The forward process progressively corrupts $a_0$ by replacing tokens with $m$. At diffusion step $t$, $a_t$ is sampled from the distribution 
$q(a_t | a_0) = \prod_{i=1}^{L} q(a_t^{i} | a_0^{i})$
and $q(a_t^{i} | a_0^{i})= \mathrm{Cat}\!\left(\alpha_t \, \mathbf{e}_{a_0^{i}} + (1 - \alpha_t)\, \mathbf{e}_{m}\right)$,
where $\alpha_t \in [0,1]$ denotes a predefined noise schedule,
and $\mathbf{e}_{a_0^{i}}$ and $\mathbf{e}_m$ denote the one‑hot vector of $a_0^i$ and $m$, respectively. 
The reverse process iteratively reconstructs $a_0$ from a masked sequence $a_T=(m,\dots,m)$.
For any intermediate step $s < t$, we sample 
$a_s \sim q(a_s \mid a_t, a_0) = \prod_{i=1}^{L} q(a_s^i \mid a_t, a_0)$, where 
$q(a_s^i \mid a_t, a_0) = \mathrm{Cat}(\mathbf{e}_{a_t^i})$ if $a_t^i \neq m$, and otherwise 
$\mathrm{Cat}\!\big(
\frac{1-\alpha_s}{1-\alpha_t}\mathbf{e}_m +
\frac{\alpha_s-\alpha_t}{1-\alpha_t}\mathbf{e}_{a_0^i}
\big)$. Thus, if $a_t^i \neq m$, the token remains unchanged, i.e., $a_s^i = a_t^i$; otherwise, $a_s^i \in \{m, a_0^i\}$ according to $\alpha_s$ and $\alpha_t$.

\paragraph{MDM Inference Scheduling} 
At diffusion step $t$,
given a set of masked positions, 
an \emph{inference schedule} is a mapping $\mathcal{S} : \{T,\dots,1\} \rightarrow 2^{\{1,\dots,L\}}$, where $\mathcal{S}(t) \subseteq \{i \in \{1,\dots,L\} \,|\, a_t^i = m \}$ specifies the subset of masked positions to be revealed. 
The reveal set is updated as $S_{t+1}=S_t\cup\mathcal{S}(t)$, resulting $\{S_t\}_{t=0}^T$ with $S_t \subseteq \{1,\dots, L\}$.
\textit{Uniform scheduling} \cite{swerdlow2025unified} uniformly reveals positions $i_t$ from the remaining masked set, defined as $i_t \sim \mathcal{U}(\{1,\dots, L\}\setminus S_t)$ and $S_{t+1}=S_t\cup\{i_t\}$. 
As the prediction $a_0^i$ depends on $a_0^{S_t}$ in the learned distribution $p(a_0^i | a_0^{S_t}, t)$,
\textit{entropy-guided scheduling} \cite{kim2025train} improves sampling quality by selecting the minimum-entropy position, 
which is defined as $S_{t+1}=S_t\cup\{i_t\}$ with $i_t \in \arg\min_{i\in\{1,\dots, L\}\setminus S_t} H\!\big(p(a_0^i| a_0^{S_t},t)\big)$.

\textbf{Symbolic Logic in Robotic Planning}
can be formalized by a tuple $(\mathcal{O}, \mathcal{A}, \Phi, \mathcal{R})$ \cite{li2025bilevel}, 
where $\mathcal{O}$ is a finite set of \textit{symbols} representing entities, 
$\mathcal{A}$ denotes a set of \textit{grounded actions} with $a \in \mathcal{A}$ corresponding to an instantiated operator that induces a discrete decision, and $\Phi$ denotes a set of \textit{grounded predicates} $\phi: \mathcal{O} \rightarrow \{0,1\}$ encoding entity properties or relations. 
The \textit{logic rule} set $\mathcal{R}$ consists of symbolic action constraints, where each rule $r \in \mathcal{R}$ is a formal formula defined over $\Phi$ to specify preconditions on $\mathcal{A}$.
In this work, $\mathcal{R}$ is instantiated using First-Order Logic (FOL)~\cite{barwise1977introduction,correa2024lifted}, such that each rule takes the form $r: \phi(o) \Rightarrow a(o)$, where $\phi(o)$ is a predicate formula constructed from $\Phi$ using logical connectives $(\wedge, \vee, \neg, \Rightarrow)$ and quantifiers $(\forall, \exists)$ over $o \in \mathcal{O}$, thereby defining the admissible set of $a$ via constraint satisfaction conditions.

\section{Approach}
\paragraph{Problem Definition}

Let $\mathcal{U}^a = \{u^a_i\}$ denote a team of energy-constrained UAVs tasked with visiting a set of task location points $\mathcal{C} = \{c_j\}$. 
Let $\mathcal{U}^g = {u^g_i}$ denote a team of traversability-constrained UGVs that plan trajectories over traversable terrain to rendezvous points $\mathcal{P} = {p_k}$ for UAV recharging.
Then, the symbol set $\mathcal{O} = \mathcal{U}^a \cup \mathcal{C} \cup \mathcal{U}^g \cup \mathcal{P}$ captures all robotic and task-relevant entities.

We represent the operational context as a heterogeneous graph $\mathcal{G} = \{\mathcal{T}, \mathcal{E}^{\mathrm{intra}}, \mathcal{E}^{\mathrm{uav}}, \mathcal{E}^{\mathrm{ugv}}\}$, where the node set is defined as $\mathcal{T}=\mathcal{T}^{c}\cup\mathcal{T}^{p}\cup\mathcal{T}^{a}\cup\mathcal{T}^{g}$.
Task nodes $\mathcal{T}^{c}=\{\mathbf{t}_i^c\}$ encode tasks $c_i\in\mathcal{C}$ with $\mathbf{t}_i^c=[\mathbf{d}_i^c,\delta_i]$, where $\mathbf{d}_i^c\in\mathbb{R}^d$ denotes task coordinates and $\delta_i\in\{0,1\}$ indicates visitation status.
Path nodes $\mathcal{T}^{p}=\{\mathbf{t}_i^p\}$ encode paths $p_i\in\mathcal{P}$ with $\mathbf{t}_i^{p}$ denotes the centroid of $p_i$. 
UAV nodes $\mathcal{T}^{a}=\{\mathbf{t}_i^a\}$ encode UAVs $u_i^a\in\mathcal{U}^a$ with $\mathbf{t}_i^a=[\mathbf{d}_i^a,f_i]$, where $\mathbf{d}_i^a$ and $f_i$ denote UAV coordinates and remaining energy, respectively.
UGV nodes $\mathcal{T}^{g}=\{\mathbf{t}_i^g\}$ encode UGVs $u_i^g\in\mathcal{U}^g$ with $\mathbf{t}_i^g=[\mathbf{d}_i^g,\mathbf{d}^a]$, where $\mathbf{d}_i^g$ denotes UGV coordinates and $\mathbf{d}^a$ represents the coordinates of all UAVs.
The edge sets model heterogeneity and spatial proximity.
$\mathcal{E}^{\mathrm{intra}} =  \{e^{\mathrm{intra}}_{i,j}\}$ connects $\mathbf{t}_i\in\mathcal{T}^k$ and $\mathbf{t}_j\in\mathcal{T}^k$ with $k\in\{c,p,a,g\}$. 
$\mathcal{E}^{\mathrm{uav}} = \{ e^{\mathrm{uav}}_{i,j} \}$ connects $\mathbf{t}_i\in\mathcal{T}^a$ to $\mathbf{t}_j\in\mathcal{T}^c\cup\mathcal{T}^g$.
$\mathcal{E}^{\mathrm{ugv}} = \{ e^{\mathrm{ugv}}_{i,j} \}$ connects 
$\mathbf{t}_i\in\mathcal{T}^g$ to $\mathbf{t}_j\in\mathcal{T}^a\cup\mathcal{T}^p$.
For all edges, $e_{i,j}=1$ if $\|\mathbf{d}_i-\mathbf{d}_j\|_2<\rho$, where $\rho$ is a predefined radius.

We define the \textit{grounded action} set $\mathcal{A}$ for the UAV-UGV team with three action types.
The UAV action $\texttt{Visit}(u^a_i, c_j)$ denotes $u^a_i$ visiting $c_j$, yielding $|\mathcal{U}^a| \times |\mathcal{C}|$ actions.
The UGV action $\texttt{Traverse}(u^g_i, p_j)$ denotes $u^g_i$ traversing to $p_j$, yielding $|\mathcal{U}^g| \times |\mathcal{P}|$ actions.
The recharge action $\texttt{Recharge}(u^a_i, u^g_j)$ denotes $u^a_i$ recharging on $u^g_j$, yielding $|\mathcal{U}^a| \times |\mathcal{U}^g|$ actions.

We define the \textit{grounded predicate} set $\Phi$ as a relational abstraction for describing domain-specific properties for the entire team. 
Each predicate is evaluated to a Boolean value based on the current state of the corresponding UAVs, UGVS, tasks, or paths. 
For example, the predicate $\texttt{UAV}(u^a_i)$ indicates whether $u^a_i$ exists, 
and $\texttt{UnderCost}(u^a_i,c_j)$ indicates whether $c_j$ is energy-affordable for $u^a_i$.

We define the \textit{logic rules} set $\mathcal{R}$ to formalize operational constraints using FOL, which defines a sufficient condition for the admissibility of an action. 
For example, the recharging rule,  \textit{``For the $i$-th UAV, if no task is under cost, and there exists a $j$-th UGV that is under cost, 
then a recharge action with the $j$-th UGV exists"}, can be expressed as $\bigl[
\texttt{UAV}(u_i^a)\wedge
\neg\exists c_k(\texttt{Task}(c_k)\wedge\texttt{UnderCost}(u_i^a,c_k))
\wedge
\exists u_j^g(\texttt{UGV}(u_j^g)\wedge\texttt{UnderCost}(u_i^a,u_j^g))
\bigr]
\Rightarrow
\exists u_j^g\,\texttt{Recharge}(u_i^a,u_j^g)$.

We introduce \textbf{MEND}
(\textit{\underline{M}asked \underline{E}rror-bounded \underline{N}euro-Symbolic \underline{D}iffusion}),
a novel learning approach as shown in Figure~\ref{fig:approach}.
Our goal is to address the collaborative planning for UAV-UGV teams under operational constraints, enabling the following two key multi-robot capabilities:
\begin{itemize}

\item \textbf{Heterogeneous Coordination with Simultaneous Execution}:
The joint planning of $\mathcal{U}^a$ and
$\mathcal{U}^g$ to execute an interdependent mission while minimizing the total makespan. 
Plans are executed in parallel,
where $\mathcal{U}^a$ determine task visitation sequences over
$\mathcal{C}$ and $\mathcal{U}^g$ compute routes over $\mathcal{P}$,
enabling continuous task execution and timely in-mission recharging.

    \item \textbf{Execution-Time Constraint Satisfaction}:
    At execution time, the generated joint plans must satisfy all operational constraints symbolically specified by $\mathcal{R}$, including:
    (1) UAV energy constraints 
    (e.g., $\texttt{UnderCost}(u^a_i,c_j)$) for $\texttt{Visit}(u^a_i, c_k)$, 
    (2) UGV traversability constraints 
    (e.g., $\texttt{ClosestUGV}(u^g_i, u^a_j)$) for $\texttt{Traverse}(u^g_i, p_k)$, and 
    (3) UAV-UGV recharging constraints 
    (e.g., $\texttt{UnderCost}(u^a_i,u^g_j)$) for $\texttt{Recharge}(u^a_i, u^g_j)$.
\end{itemize}

\paragraph{MDM for Collaborative Planning of Heterogeneous Robots}

We formulate collaborative UAV-UGV planning as learning the conditional distribution $p(a | \mathcal{H})$ over joint action sequences $a=(a^1,\dots,a^L)$, where $a^l \in \mathcal{A}$. Due to the combinatorial action space, we approximate $p(a | \mathcal{H})$ using a conditional masked diffusion model (MDM) for context-aware joint action generation.

\color{black}

Given the operational context graph $\mathcal{G}$, we use a heterogeneous graph network $\omega$ \cite{hu2020heterogeneous} to compute node embeddings $\mathcal{H}=\{\mathbf{h}_i\}=\omega(\mathcal{G})$ for context-aware planning.
Each node attribute $\mathbf{t}_i$ is projected as $\mathbf{z}_i=\mathbf{W}^{z}\mathbf{t}_i$, where $\mathbf{W}^{z}$ is a learnable matrix.
Then $\mathbf{z}_i$ is transformed into query, key, and value embeddings as
$\vel_i=\WW_q\zz_i$,
$\kk_i=\WW_k\zz_i$, and
$\vv_i=\WW_v\zz_i$, respectively,
where $\WW_q$, $\WW_k$, and $\WW_v$ are learnable matrices.
To encode homogeneous and heterogeneous relationships, attention is applied over edges  $(i,o)\in\mathcal{E}^{\mathrm{intra}}$, $(i,k)\in\mathcal{E}^{\mathrm{uav}}$, and $(i,n)\in\mathcal{E}^{\mathrm{ugv}}$, with attention weights $\alpha_{i,o}=\mathrm{SoftMax}_o(\vel_i^\top\kk_o/\sqrt{d})$, $\beta_{i,k}^{a}=\mathrm{SoftMax}_k(\vel_i^\top\kk_k/\sqrt{d})$, and $\beta_{i,n}^{g}=\mathrm{SoftMax}_n(\vel_i^\top\kk_n/\sqrt{d})$, respectively, where $d$ denotes the embedding dimension.
Then, $\hh_i$ is obtained by attention aggregation with layer normalization $\text{LN}(\cdot)$ as
$\mathbf{h}_i=\mathrm{LN}\!\left(
\sum_{e_{i,o}^{\mathrm{intra}}=1}\alpha_{i,o}\mathbf{v}_{o}
+
\sum_{e_{i,k}^{\mathrm{uav}}=1}\beta_{i,k}^{a}\mathbf{v}_{k}
+
\sum_{e_{i,n}^{\mathrm{ugv}}=1}\beta_{i,n}^{g}\mathbf{v}_{n}
\right)$.

We apply the forward process $q(a_t^{i}| a_0^{i})$ to the ground-truth action sequence $a_0$ and consider the reverse process  $q(a_{s}^{i}| {a}_t, {a}_0)$, which defines the posterior for recovering $a_0$. 
However, $a_0$ is unknown during sampling, and $q(a_{s}^{i}| {a}_t, {a}_0)$ does not explicitly incorporate operational context.
To address this, 
we use a denoising network to approximate the reverse process that predicts the distribution $p_\theta(a_0^i | a_t,t,\mathcal{H})$ over $a^i_0\in\mathcal{A}$,
which is learned by minimizing the negative log-likelihood over masked positions:
$
\mathbb{E}_{{a}_0,\, 
t,
\sim \mathrm{Unif}(\{1, \dots, T\}),
\, a_t \sim q(a_t | a_0)}
[
\sum_{i:a_t^i=m}
- \log p_\theta(a_0^{i} | {a}_t, t, \mathcal{H})
$, where $\mathrm{Unif}(\cdot)$ denotes the uniform distribution.
The learned distribution $p_\theta(a_0^{i}| {a}_t, t, \mathcal{H})$ enables collaborative planning for the UAV-UGV team by generating a context-aware joint sequence $a$ through conditioning on $\mathcal{H}$.

\color{black}

\begin{figure}[!t]
\centering
\includegraphics[width=1\textwidth]{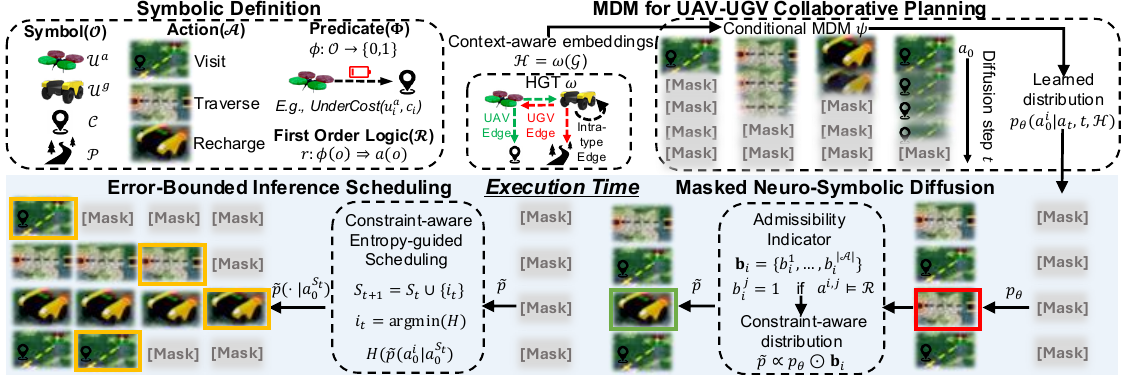}
\caption{MEND formulates collaborative planning as a conditional masked generative process for context-aware joint action generation.
During execution, a neuro-symbolic diffusion paradigm enforces operational constraints via FOL-induced admissibility indicators, while an error-bounded inference scheduling minimizes entropy to improve planning optimality.
}
\label{fig:approach}
\end{figure}

\paragraph{Execution-Time Masked Neuro-Symbolic Diffusion}

As a theoretical limitation, MDMs cannot guarantee $a^i \models \mathcal{R}$ for samples $a^i \sim p_\theta$, since likelihood maximization does not enforce constraints.
Moreover, each FOL rule in $\mathcal{R}$ induces a discrete admissibility indicator, which is non-differentiable and thus cannot be imposed by gradient-based training or standard diffusion sampling.

We propose a neuro-symbolic approach that constrains conditional masked diffusion policies to enforce execution-time constraints for UAV-UGV teams.
MEND reshapes the learned distribution $p_\theta$ into a constraint-aware distribution $\tilde{p}$ by reweighting actions probabilities with an admissibility indicator induced by $\mathcal{R}$.
By removing inadmissible mass while preserving relative likelihoods over admissible actions, MEND satisfies non-differentiable constraints, i.e., $\forall a^i \sim \tilde{p},\; a^i \models \mathcal{R}$.

Let $a_t$ denote a partially masked sequence at step $t$, and 
$\mathcal{M}(a_t)=\{i | a_t^i=m\}$ the masked indices. For each $i \in \mathcal{M}(a_t)$, the MDM defines a distribution over $\mathcal{A}$ as
$p_\theta(a_0^i=a^{i,j}| a_t,t,\mathcal{H}),\ \forall a^{i,j}\in\mathcal{A}.$
This distribution satisfies $p_\theta(\cdot)\ge 0$ and $\sum_{a^{i,j}\in\mathcal{A}} p_\theta(a_0^i=a^{i,j}|\cdot)=1$.
For each position $i$, define an admissibility indicator $\mathbf{b}_i \in \{0,1\}^{|\mathcal{A}|}$, where $b_i^j=1$ iff $a^{i,j}\models\mathcal{R}$.
We construct a constraint-aware distribution by masking and renormalizing as
$\tilde{p}_i =
\frac{p_\theta(\cdot | a_t, t, \mathcal{H}) \odot \mathbf{b}_i}
{\mathbf{1}^\top (p_\theta(\cdot | a_t, t, \mathcal{H}) \odot \mathbf{b}_i)}$.
By construction, $\tilde{p}_i(a^{i,j})=0$ for $a^{i,j}\not\models\mathcal{R}$, and for any admissible actions $a^{i,j},a^{i,k}\models\mathcal{R}$,
$
\frac{\tilde{p}_i(a^{i,j})}{\tilde{p}_i(a^{i,k})}
=
\frac{p_\theta(a^{i,j}| a_t,t,\mathcal{H})}
{p_\theta(a^{i,k}| a_t,t,\mathcal{H})}.
$

Under the recharging rule, if UAV $u_i^a$ cannot afford any \texttt{Visit} action but an affordable UGV $u_j^g$ exists, then \texttt{Recharge}$(u_i^a,u_j^g)$ is the only admissible action at position $i$, with $b_i^{\texttt{Recharge}}=1$ and $b_i^{j}=0$ otherwise. Thus, $\tilde p$ assigns zero probability to $a^{i,j}\not\models\mathcal{R}$, MEND explicitly ensures the satisfaction of execution-time operational constraints under the neuro-symbolic diffusion paradigm.

\paragraph{Execution-Time Error-Bounded Inference Scheduling}

The quality of MDM sampling depends on the schedule $\mathcal{S}$ and revealed sets $\{S_t\}_{t=0}^T$ because $p_\theta(a_0^i | a_0^{S_t}, t, \mathcal{H})$ changes with $a_0^{S_t}$.
Entropy-guided scheduling \cite{kim2025train} selects the minimum-entropy position and assumes the sampled distribution $\nu$ from $p_\theta$ aligns with the ground-truth distribution $\mu$, where $a \sim \mu$ with $a^i \models \mathcal{R}$. 
If $p_\theta$ assigns mass to $a^i \not\models \mathcal{R}$, this entropy deviates from $\mu(a_0^i | a_0^{S_t})$, causing suboptimal, constraint-violating samples.

To address this theoretical limitation, we introduce a novel error-bounded inference scheduling strategy at execution time that guarantees predictive entropy computed under the constraint-aware distribution $\tilde p$ yields a provably smaller mismatch with $\mu(a_0^i | a_0^{S_t})$ than entropy computed under $p_\theta$.

We characterize the effect of $\{S_t\}_{t=0}^T$ on sampling quality by the information curve $\rr=(r_1,\dots,r_L)$, where
$r_j=\mathbb{E}_{|S|=j-1,\;i\notin S}[I(a_0^i;a_0^S)]$ for $j\in\{1,\dots,L\}$, with $S\subseteq\{1,\dots,L\}$ and $i\in\{1,\dots,L\}\setminus S$ sampled uniformly.
Given $S_0=\emptyset$, $S_{t-1}\subseteq S_t$, and $N_t=|S_t|$, the schedule-induced approximation $\rr^T=(r_1^T,\dots,r_L^T)$ is defined as
$r_j^T=r_{N_{t-1}+1}$ for $N_{t-1}<j\le N_t$.
We then quantify the sampling error between the true action distribution $\mu$ and the schedule-induced distribution $\nu_{\{S_t\}_{t=0}^T}$:

\begin{lemma}[Schedule-Induced Sampling Error]
\label{lemma:sample}
Let $\mu$ be the true distribution over $a_0$ with $\mu(a_0^i \not\models \mathcal{R}) = 0$, 
and let $\{S_t\}_{t=0}^T$ be an inference schedule. 
Let $\nu_{\{S_t\}_{t=0}^T}$ denote the distribution induced by 
sampling from $p(\cdot | a_0^{S_t}, t, \mathcal{H})$. 
The schedule-induced sampling error is bounded as \cite{chen2025optimal}:
\begin{equation}
\mathrm{KL}\!\left(\mu \,\|\, \nu_{\{S_t\}_{t=0}^T}\right)
\le
\left\| \mathbf{r} - \mathbf{r}^T \right\|_{1}.
\label{eq:error}
\end{equation}
\end{lemma}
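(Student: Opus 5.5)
We would prove Lemma~\ref{lemma:sample} by following the argument of \cite{chen2025optimal}: decompose the KL divergence along the schedule with the chain rule, and then bound each block of simultaneously revealed tokens by a mutual-information deficit. The constraint $\mu(a_0^i\not\models\mathcal{R})=0$ only guarantees that $\mu$ is a valid target and that every term below is finite. The argument itself is schedule-level and does not depend on $\mathcal{R}$.

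\textbf{Step 1 (chain rule).} At step $t$ the sampler reveals the block $B_t=S_t\setminus S_{t-1}$, drawing each $a_0^i$ with $i\in B_t$ independently from the conditional marginal given $a_0^{S_{t-1}}$. We assume an ideal denoiser, so $p(a_0^i\mid a_0^{S_{t-1}},t,\mathcal{H})=\mu(a_0^i\mid a_0^{S_{t-1}})$. Hence $\nu(a_0^{B_t}\mid a_0^{S_{t-1}})=\prod_{i\in B_t}\mu(a_0^i\mid a_0^{S_{t-1}})$. The chain rule for KL then gives
\[
\mathrm{KL}(\mu\|\nu)=\sum_{t=1}^T \mathbb{E}_{\mu}\Big[\mathrm{KL}\big(\mu(a_0^{B_t}\mid a_0^{S_{t-1}})\,\big\|\,\textstyle\prod_{i\in B_t}\mu(a_0^i\mid a_0^{S_{t-1}})\big)\Big].
\]
Each summand is the conditional total correlation $\mathrm{TC}(a_0^{B_t}\mid a_0^{S_{t-1}})$, which equals $\sum_{i\in B_t}H(a_0^i\mid a_0^{S_{t-1}})-H(a_0^{B_t}\mid a_0^{S_{t-1}})$.

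\textbf{Step 2 (block to information curve).} Order $B_t$ as $i_1,\dots,i_k$ and expand the joint entropy by the chain rule. This gives $\mathrm{TC}=\sum_{\ell}\big[I(a_0^{i_\ell};a_0^{S_{t-1}\cup\{i_1,\dots,i_{\ell-1}\}})-I(a_0^{i_\ell};a_0^{S_{t-1}})\big]$. The sets in the schedule are random (uniform within each size), which is the setting of \cite{chen2025optimal}. Taking expectation over the ordering and the sets, the $\ell$-th term becomes $r_{N_{t-1}+\ell}-r_{N_{t-1}+1}$. This is exactly $r_j-r_j^T$ for $j=N_{t-1}+\ell$. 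The quantity $r_j$ is nondecreasing in $j$, since conditioning on more variables raises mutual information in expectation under uniform sampling. So each term is nonnegative and equals $|r_j-r_j^T|$.

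\textbf{Step 3 (sum).} Summing over $t$ and $\ell$ gives $\sum_{j=1}^L (r_j-r_j^T)=\|\mathbf{r}-\mathbf{r}^T\|_1$. In the exact-denoiser case this is an equality, so the stated inequality holds. If the denoiser is only approximate, an additional learning-error term appears, which the lemma implicitly absorbs through the assumption on $p$.

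The main obstacle is Step 2. Identifying the expected per-token information gain with the averaged curve $r_j$ requires exchangeability: the revealed set and the target index must be uniformly distributed at each size. A deterministic, adaptive schedule such as entropy-guided selection breaks this. For that case we would first try to show that the bound still holds as an inequality via the monotonicity of $j\mapsto r_j$ and the concavity-type argument of \cite{chen2025optimal}. Failing that, we would state the lemma for randomized, uniform-within-size schedules, as in that reference.
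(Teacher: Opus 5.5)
Your proposal is correct and is essentially the argument of \cite{chen2025optimal} that the paper invokes for this lemma: a chain-rule decomposition of $\mathrm{KL}(\mu\|\nu)$ into conditional total correlations of the revealed blocks, followed by exchangeability of uniformly random reveal sets, which turns each block's contribution into $\sum_{N_{t-1}<j\le N_t}(r_j-r_j^T)$. Your restricted scope is the right one: exact conditionals, uniform-within-size random reveal sets, and the left-hand side read as the expected KL. Do not try the deterministic or adaptive case, because there the bound can genuinely fail: revealing a perfectly correlated pair together costs their full mutual information, while $\|\mathbf{r}-\mathbf{r}^T\|_1$ charges only its average over random pairs.
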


We show that sampling from the constraint-aware distribution $\tilde{p}$ reduces this error through $\tilde{\nu}_{\{S_t\}_{t=0}^T}$:

\begin{table*}[t] 
\centering 
\caption{Example FOL rules for UAV-UGV collaborative planning under operational constraints.}
\vspace{-4pt}
\label{tb:logic_rules} 
\begin{adjustbox}{max width=\textwidth} 
\begin{tabular}{lp{0.85\textwidth}} \toprule \textbf{Rule} & \textbf{First-order logic} \\ \midrule UAV task visitation & \( [\texttt{UAV}(u^a_i)\wedge \exists c_k(\texttt{Task}(c_k)\wedge \texttt{UnderCost}(u^a_i,c_k))] \Rightarrow \exists c_k\,\texttt{Visit}(u^a_i,c_k) \) \\ \midrule UGV path traversal & \( [\texttt{UGV}(u^g_j)\wedge \exists u^a_i\,\exists p_l(\texttt{UAV}(u^a_i)\wedge \texttt{LowEnergy}(u^a_i,f_i)\wedge \texttt{UnderCost}(u^a_i,p_l)\wedge \texttt{ClosestUGV}(u^g_j,u^a_i))] \Rightarrow \exists p_l\,\texttt{Traverse}(u^g_j,p_l) \) \\ \midrule Recharge collaboration & \( [\texttt{UAV}(u^a_i)\wedge \neg \exists c_k(\texttt{Task}(c_k)\wedge \texttt{UnderCost}(u^a_i,c_k)) \wedge \exists u^g_j(\texttt{UGV}(u^g_j)\wedge \texttt{UnderCost}(u^a_i,u^g_j))] \Rightarrow \exists u^g_j\,\texttt{Recharge}(u^a_i,u^g_j) \) \\ \bottomrule \end{tabular} 
\end{adjustbox} 
\vspace{-8pt}
\end{table*}

\begin{theorem}[Error Reduction via Constraint-Aware Scheduling]
\label{thm:constraint}
Let $\nu_{\{S_t\}_{t=0}^T}\!(a^i \models \mathcal{R}) \in (0,1]$ denote the admissible mass under $\nu_{\{S_t\}_{t=0}^T}$; hence $\log \nu_{\{S_t\}_{t=0}^T}\!(a^i \models \mathcal{R}) \leq 0$.
Let $\nu_{\{S_t\}_{t=0}^T}$ and $\tilde{\nu}_{\{S_t\}_{t=0}^T}$ be induced by sampling from $p(\cdot | a_0^{S_t}, t, \mathcal{H})$ and $\tilde{p}(\cdot | a_0^{S_t}, t, \mathcal{H})$, respectively.
Then:
\begin{equation}\small
\mathrm{KL}\big(\mu \;\|\; \tilde{\nu}_{\{S_t\}_{t=0}^T}\big)
=
\mathrm{KL}\big(\mu \;\|\; \nu_{\{S_t\}_{t=0}^T}\big) 
+ \log \nu_{\{S_t\}_{t=0}^T}\!\big(a^i \models \mathcal{R}\big),
\label{eq:kl_difference}
\end{equation}
It follows that constraint-aware scheduling reduces the sampling error:
\begin{equation}
\mathrm{KL}\big(\mu \;\|\; \tilde{\nu}_{\{S_t\}_{t=0}^T}\big)
\le
\mathrm{KL}\big(\mu \;\|\; \nu_{\{S_t\}_{t=0}^T}\big).
\end{equation}
\noindent\textit{Proof.} See Appendix B for the proofs of Lemma~\ref{lemma:sample} and Theorem~\ref{thm:constraint}.
\label{eq:kl_reduction}

\end{theorem}

Lemma~\ref{lemma:sample} shows that the sampling error is governed by the information-curve mismatch $\|\rr-\rr^T\|_1$. Thus, an effective schedule $\{S_t\}_{t=0}^T$ reveals positions with larger mutual information $I(a_0^i;a_0^S)$ earlier to minimize $\|\rr-\rr^T\|_1$. Since $I(a_0^i;a_0^S)=H(a_0^i)-H(a_0^i|a_0^S)$ and $H(a_0^i)$ is fixed, maximizing $I(a_0^i;a_0^{S_t})$ is equivalent to minimizing $H(a_0^i|a_0^{S_t})$. However, if $p_\theta(a_0^i|a_0^{S_t},t,\mathcal{H})$ assigns nonzero mass to inadmissible actions $(a^i\not\models\mathcal{R})$, then $\nu_{\{S_t\}_{t=0}^T}$ may deviate from $\mu$.

Theorem~\ref{thm:constraint} addresses this limitation by constructing $\{S_t\}_{t=0}^T$ under the constraint-aware distribution $\tilde p$, which is provably closer to $\mu$ and satisfies $\tilde p(a^i)=0$ for $a^i\not\models\mathcal{R}$ while preserving the relative likelihoods among admissible actions. 
Since $\mu(a_0^i|a_0^{S_t})$ is unknown at inference time, $H(a_0^i | a_0^{S_t})
= - \sum_{a_0^i} \mu(a_0^i | a_0^{S_t})
\log \mu(a_0^i | a_0^{S_t})$ is intractable. 
MEND approximates it using $H(\tilde p(a_0^i|a_0^{S_t},t,\mathcal{H}))$ and constructs $\{S_t\}_{t=0}^T$ via
$
S_{t+1}=S_t\cup\{i_t\},
$
where
$
i_t\in
\argmin_{i\in\{1,\dots,L\}\setminus S_t}
H(\tilde p(a_0^i|a_0^{S_t},t,\mathcal{H})).
$

As a result, this error-bounded schedule constructed under $\tilde p$ reduces the 
misalignment between $p_\theta$ and the true conditional distribution 
$\mu(a_0^i | a_0^{S_t})$.
Then, the joint action sequence $a$ is generated by sampling $a^i \sim \tilde{p}(a_0^i | a_0^{S_t}, t, \mathcal{H})$, which is executed by UAVs and UGVs simultaneously, enabling parallel operation of heterogeneous robots while satisfying operational constraints.

\section{Experiments}\label{sec:exp}
\paragraph{Experimental Setups}

We evaluate MEND in a high-fidelity Unity-based 3D multi-robot simulator in ROS, using real-world maps reconstructed from 3D LiDAR scans, and on physical robot teams. Mission complexity ranges from 15 tasks with 5 paths to 45 tasks with 10 paths, and team size from 1 UAV-1 UGV to 4 UAV-2 UGV. 
Experiments are conducted in a $10,\mathrm{km} \times 10,\mathrm{km}$ region, where UAVs and UGVs operate at fixed speeds of $q_a=10$~m/s and $q_g=4.5$~m/s, with energy capacities of $287.7$~kJ and $36{,}810$~kJ, respectively.
UAV and UGV energy consumption are modeled as
$0.0461(q_a)^3 - 0.5834(q_a)^2 - 1.8761 q_a + 229.6$ and $464.8 q_g + 356.3$, respectively \cite{hurwitz2021mobile}.
In all scenarios, UAVs visit tasks while UGVs provide recharging support along road networks under operational constraints. To obtain expert data, we adopt MIP for constrained UAV task assignment and UGV path planning. While effective offline, MIP is unsuitable for real-time execution. See Appendices D and E for implementation and training details.

We compare MEND against three metaheuristics planners and two baselines:
(1) \textit{Guided Local Search} (\textbf{GLS}) \cite{arnold2019knowledge}, which augments the routing with penalties to bias the search away from repeatedly selected task assignments;
(2) \textit{Tabu Search} (\textbf{TS}) \cite{gmira2021tabu}, which performs an iterative neighborhood search with a tabu list;
(3) \textit{Simulated Annealing} (\textbf{SA}) \cite{yaugmur2021multi}, which allows early exploration via probabilistic acceptance of worse routes and tightens acceptance over iterations;
(4) \textit{Vanilla MDM} (\textbf{VMDM}): a baseline method %
performs diffusion sampling from the learned distribution using a uniform scheduling; and
(5) \textit{Entropy-guided MDM} (\textbf{EMDM}),
a baseline method (presented in the preliminary based on \cite{kim2025train}), which replaces uniform scheduling with entropy-guided scheduling while sampling from the unconstrained learned distribution. 

We quantitatively evaluate and compare MEND with other methods using three metrics:
(1) \textit{Task Completion Rate} (\textbf{TCR}) \cite{chen2014traccs} is defined as the ratio of completed tasks to all required tasks, measured as a percentage;
(2) \textbf{Makespan} \cite{10207838} is defined as the total time required to complete all task visits, and is reported only for scenarios in which all methods achieve full task completion, measured in seconds;
(3) \textbf{Energy} \cite{qi2025multi} is defined as the cumulative energy consumption of all UAVs and UGVs over the mission horizon, measured in kilojoules.

We present three example First-Order Logic (FOL) rules used in MEND to encode operational constraints, as summarized in Table~\ref{tb:logic_rules}:
(1) \textit{UAV task visitation}, which restricts UAVs to energy-affordable tasks;
(2) \textit{UGV path traversal}, which allows UGV movement only when a UAV requires recharging support; and
(3) \textit{Recharge collaboration}, which enforces that a recharging event occurs when a UAV is about to deplete its energy.

\begin{figure*}[!!!t]
    \centering
    \includegraphics[width=1\textwidth]{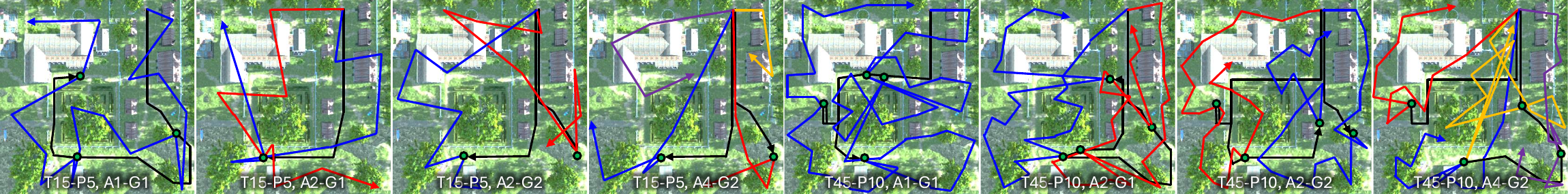}
    \vspace{-14pt}
    \caption{Unity-ROS results with Starling UAVs and Warthog UGVs under different missions. Green circles denote recharge locations, colored curves UAV trajectories, and black lines UGV paths.}
    \vspace{-4pt}
    \label{fig:grid}
\end{figure*}

\begin{table*}[t]
\centering
\caption{Quantitative comparison of MEND against baseline methods, evaluated in simulations.}
\label{tb:combined}
\begin{adjustbox}{max width=\textwidth}
\begin{tabular}{llcccccccccccc}
\toprule
\textbf{Mission} & \textbf{Method} & \multicolumn{3}{c}{\textbf{1UAV--1UGV}} & \multicolumn{3}{c}{\textbf{2UAV--1UGV}} & \multicolumn{3}{c}{\textbf{2UAV--2UGV}} & \multicolumn{3}{c}{\textbf{4UAV--2UGV}} \\
\cmidrule(lr){3-5} \cmidrule(lr){6-8} \cmidrule(lr){9-11} \cmidrule(lr){12-14}
& & TCR & Makespan & Energy & TCR & Makespan & Energy & TCR & Makespan & Energy & TCR & Makespan & Energy \\
\midrule
\multirow{5}{*}{T15-P5} 
& GLS \cite{arnold2019knowledge} & \textbf{100.00} & 8085.33 & 12989.03 & \textbf{100.00} & 6024.59 & 12729.46 & \textbf{100.00} & 5833.05 & 17161.64 & \textbf{100.00} & 4571.07 & 17878.57 \\
& TS \cite{gmira2021tabu}  & \textbf{100.00} & 7664.21 & 12634.75 & \textbf{100.00} & 5673.36 & 13409.86 & \textbf{100.00} & 4841.66 & 15068.19 & \textbf{100.00} & 4874.15 & 19424.62 \\
& SA \cite{yaugmur2021multi} & \textbf{100.00} & 8830.71 & 12287.40 & \textbf{100.00} & 6700.42 & 14045.45 & \textbf{100.00} & 5418.23 & 15704.10 & \textbf{100.00} & 5268.51 & 16712.98 \\

\cmidrule(lr){2-14}

& VMDM (baseline) & 92.32 & 7355.55 & 8120.37 & 90.33 & 5478.12 & 9222.99 & 81.33 & 4050.49 & 8515.34 & 89.83 & 3412.35 & 9420.57 \\
& EMDM (baseline) & 97.72 & 6917.06 & 6543.11 & 98.17 & 4395.18 & 9744.47 & 97.33 & 3955.99 & 6909.90 & 93.50 & 3190.77 & 8350.71\\ 
& MEND (ours)  & \textbf{100.00} & \textbf{6650.30} & \textbf{5893.97} & \textbf{100.00} & \textbf{4026.77} & \textbf{9079.82} & \textbf{100.00} & \textbf{3917.65} & \textbf{6878.65} & \textbf{100.00} & \textbf{2772.57} & \textbf{6966.35}\\
\midrule
\midrule
\multirow{5}{*}{T45-P10} 
& GLS \cite{arnold2019knowledge} & \textbf{100.00} & 11702.12 & 14997.32 & \textbf{100.00} & 10655.58 & 26200.28 & \textbf{100.00} & 9105.88 & 23768.90 & \textbf{100.00} & 8941.05 & 27142.22 \\
& TS \cite{gmira2021tabu}  & \textbf{100.00} & 12502.38 & 15908.72 & \textbf{100.00} & 9401.59 & 21203.38 & \textbf{100.00} & 9770.71 & 27174.47 & \textbf{100.00} & 8091.14  & 28972.64 \\
& SA \cite{yaugmur2021multi} & \textbf{100.00} & 13271.41 & 16731.31 & \textbf{100.00} & 12669.44 & 27797.87 & \textbf{100.00} & 8396.62 & 22683.09 & \textbf{100.00} & 8855.78 & 33757.12 \\
\cmidrule(lr){2-14}
& VMDM (baseline)  & 62.56 & 10184.85 & 10496.56 & 70.28 & 6356.85 & 14536.87 & 73.83 & 5518.29 & 10206.61 & 72.44 & 4716.66 & 13771.02\\
& EMDM (baseline) & 89.61 & 10157.86 & 7826.26 & 78.67 & 6093.69 & 10932.93 & 77.16 & 5506.65 & 9773.20 & 82.59 & 4601.49 & 12435.69 \\
& MEND (ours) &\textbf{100.00} & \textbf{10108.99} & \textbf{7502.81} & \textbf{100.00} & \textbf{5957.50} & \textbf{10555.20} & \textbf{100.00} & \textbf{5290.29} & \textbf{9110.07} & \textbf{100.00} & \textbf{4555.97} & \textbf{12114.19}\\
\bottomrule
\end{tabular}
\end{adjustbox}
\end{table*}

\begin{figure*}[!!!t]
    \centering
    \includegraphics[width=1\textwidth]{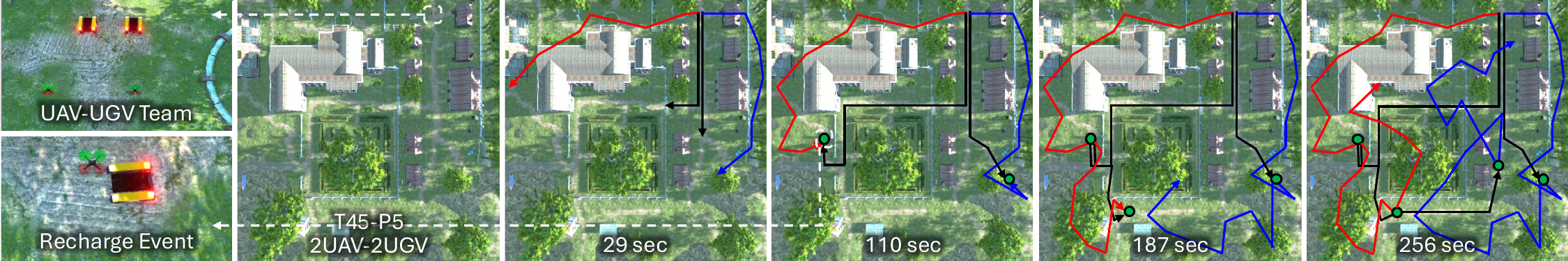}
    \vspace{-14pt}
    \caption{Results for collaborative planning under operational constraints. The first column shows the team configuration and a recharge event, while the remaining columns show different timestamps. }
    \label{fig:flood2}
\end{figure*}

\begin{figure*}[t]
    \centering
    \vspace{-4pt}
    \includegraphics[width=1\textwidth]{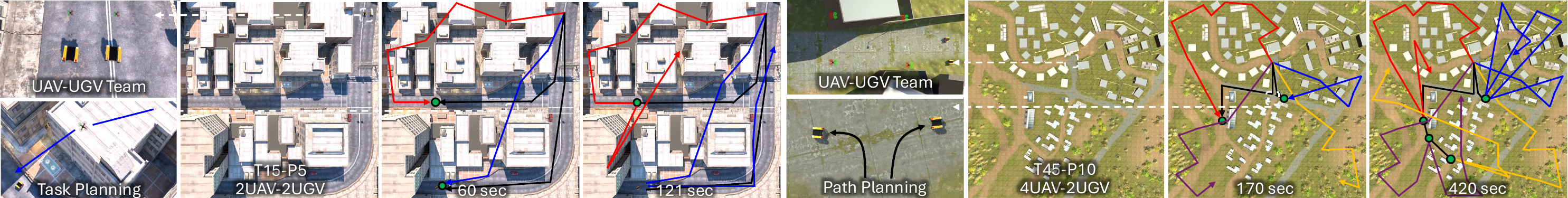}
    \vspace{-14pt}
    \caption{Qualitative results in unseen environments with different depots and road networks: a city scenario with 2 UAV-2 UGV and 15 tasks, and a suburban scenario with 4 UAV-2 UGV and 45 tasks.
}
    \vspace{-4pt}
    \label{fig:gene}
\end{figure*}

\paragraph{Results under Various Task and Team Configurations}
We evaluate MEND in ROS-integrated Unity 3D simulations with energy-constrained Starling-2 UAVs and terrain-constrained Warthog UGVs across varying missions and team sizes.
Figure~\ref{fig:grid} demonstrates scalable collaborative planning under operational constraints, Figure~\ref{fig:flood2} presents a time-stamped scenario illustrating timely in-mission recharging, and Table~\ref{tb:combined} reports quantitative results.
In the 15-task mission, metaheuristic methods achieve perfect TCR but incur the highest makespan and energy due to suboptimal coordination and routing.
Diffusion baselines improve efficiency, with EMDM outperforming VMDM, but still suffer TCR degradation from constraint violations.
MEND achieves 100$\%$ TCR with the lowest makespan and energy across all settings.
In the 45-task mission, MEND further widens the performance gap, demonstrating its effectiveness under operational constraints.

\paragraph{Results on Unseen Simulation and Real-World Environments}

We evaluate the generalizability of MEND in unseen environments with different depots and road networks.
Trained on the rural map in Figure~\ref{fig:grid}, MEND is evaluated on two additional maps in Figure~\ref{fig:gene}, featuring novel road layouts (e.g., a city map with two distinct paths) and depots (e.g., a suburban map with a central team start).
We further evaluate MEND on an unseen 3D-scanned LiDAR campus map, as shown in Figure~\ref{fig:umass}. The 3D-scanned environment enables evaluation under real-world deployment conditions using physical robot models, where the Starling UAV has a $106.6\,\mathrm{kJ}$ capacity with $198.6\,\mathrm{kJ/h}$ consumption at $10\,\mathrm{m/s}$, and the Warthog UGV has a $36{,}810,\mathrm{kJ}$ capacity with $2{,}680.3,\mathrm{kJ/h}$ consumption at $5\,\mathrm{m/s}$ (See Appendix F for additional experimental results). 
These results demonstrate that MEND generalizes effectively to unseen simulations and real-world environments.

\begin{figure*}[!!!t]
    \centering
    \includegraphics[width=1\textwidth]{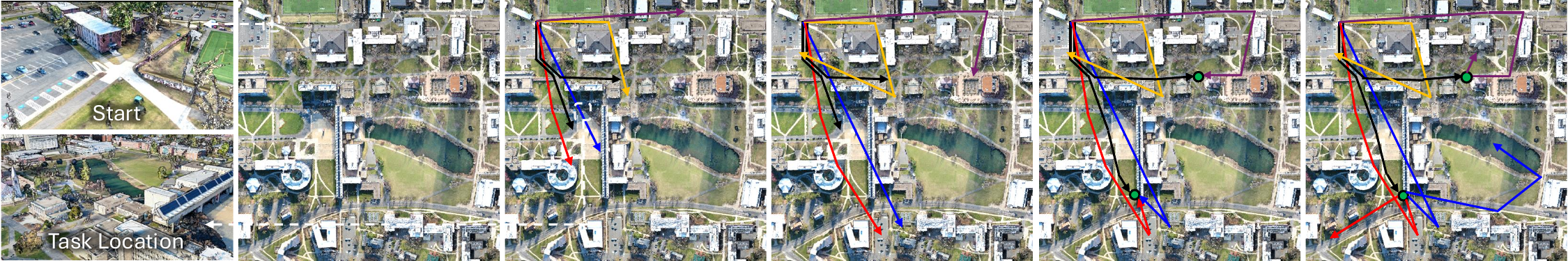}
    \vspace{-14pt}
    \caption{MEND demonstrates generalizability on a 3D LiDAR-scanned map of a real, unseen university campus. This validation scenario consists of 15 tasks executed by 4 UAV-2 UGV.
    }
    \vspace{-4pt}
    \label{fig:umass}
\end{figure*}

\begin{figure*}[t]
    \centering
    \includegraphics[width=1\textwidth]{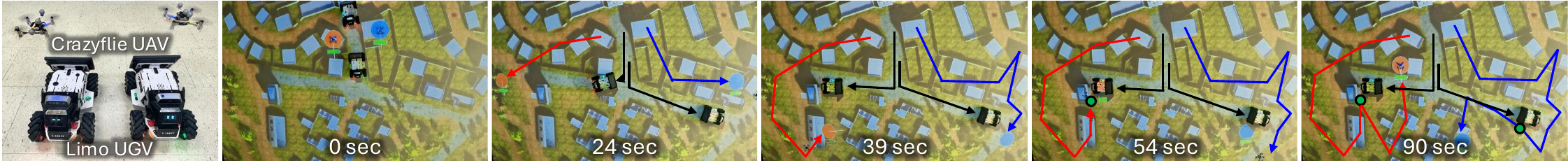}
    \caption{Qualitative results of MEND in real-world mixed-reality case studies using two Crazyflie UAVs and two Limo UGVs controlled via ROS2.
    }
    \label{fig:real}
\end{figure*}

\paragraph{Case Studies on Physical UAV-UGV Teams}
We validate MEND through real-world case studies using a team of two Crazyflie UAVs and two Limo UGVs controlled through ROS2. As shown in Figure~\ref{fig:real}, experiments are conducted in a mixed-reality setup, where projected outdoor environments are overlaid onto the physical workspace and robot poses are tracked via a motion capture system. Tasks are distributed across the map, including regions that are non-traversable for the Limo UGVs. The team successfully and efficiently visits all 15 tasks under operational constraints, with the fourth subfigure illustrating a recharging event and demonstrating MEND's capability for collaborative planning and applicability to real-world deployment.
\paragraph{Scalability}

\begin{wrapfigure}{r}{0.275\columnwidth}
    \centering
    \includegraphics[width=\linewidth]{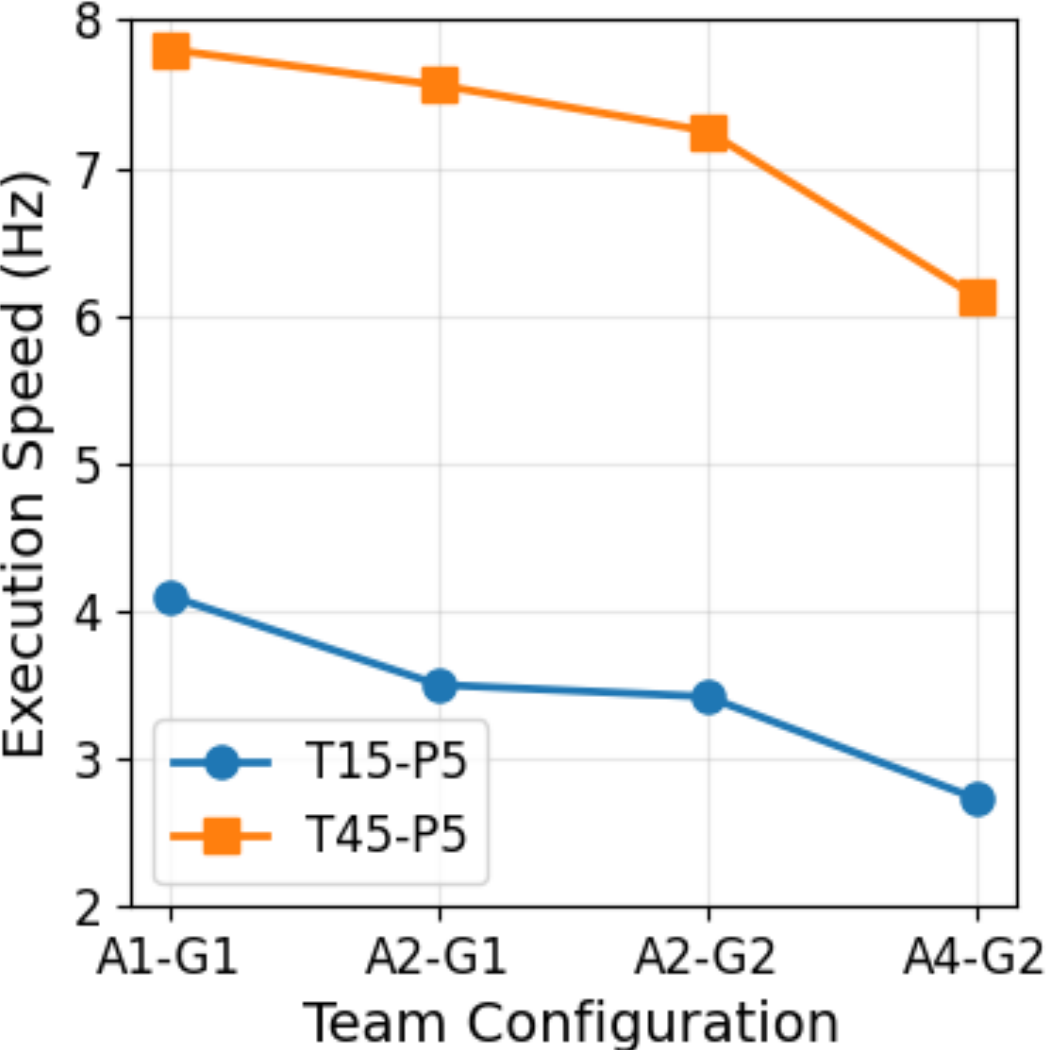}
    \caption{Runtime analysis.}
    \vspace{-10pt}
    \label{fig:execution}
\end{wrapfigure}

We evaluate MEND's scalability across team configurations and mission complexities in Figure~\ref{fig:execution}, which reports execution frequency (Hz) for four team configurations under two mission complexities (see Appendix C for time complexity analysis).
MEND maintains a planning frequency of $7.8$~Hz for A1-G1 with T15-P5 and $2.8$~Hz for A4-G2 with T45-P10.
Compared with ground-truth MIP on A4-G2 with T45-P10, MEND achieves near-optimal makespan and energy, with 4.28\% and 5.95\% gaps, respectively, while running $472\times$ faster.
On T15-P5 with 2 UAVs and 1 UGV, MEND achieves 1.9\% makespan and 3.9\% energy gaps while planning $14.5\times$ faster.
These results demonstrate MEND's scalability to larger UAV-UGV teams.

\begin{wrapfigure}{r}{0.5\columnwidth}
    \vspace{-15pt}
    \centering
    \subfigure[No NS/EGS]{
        \includegraphics[width=0.3\linewidth]{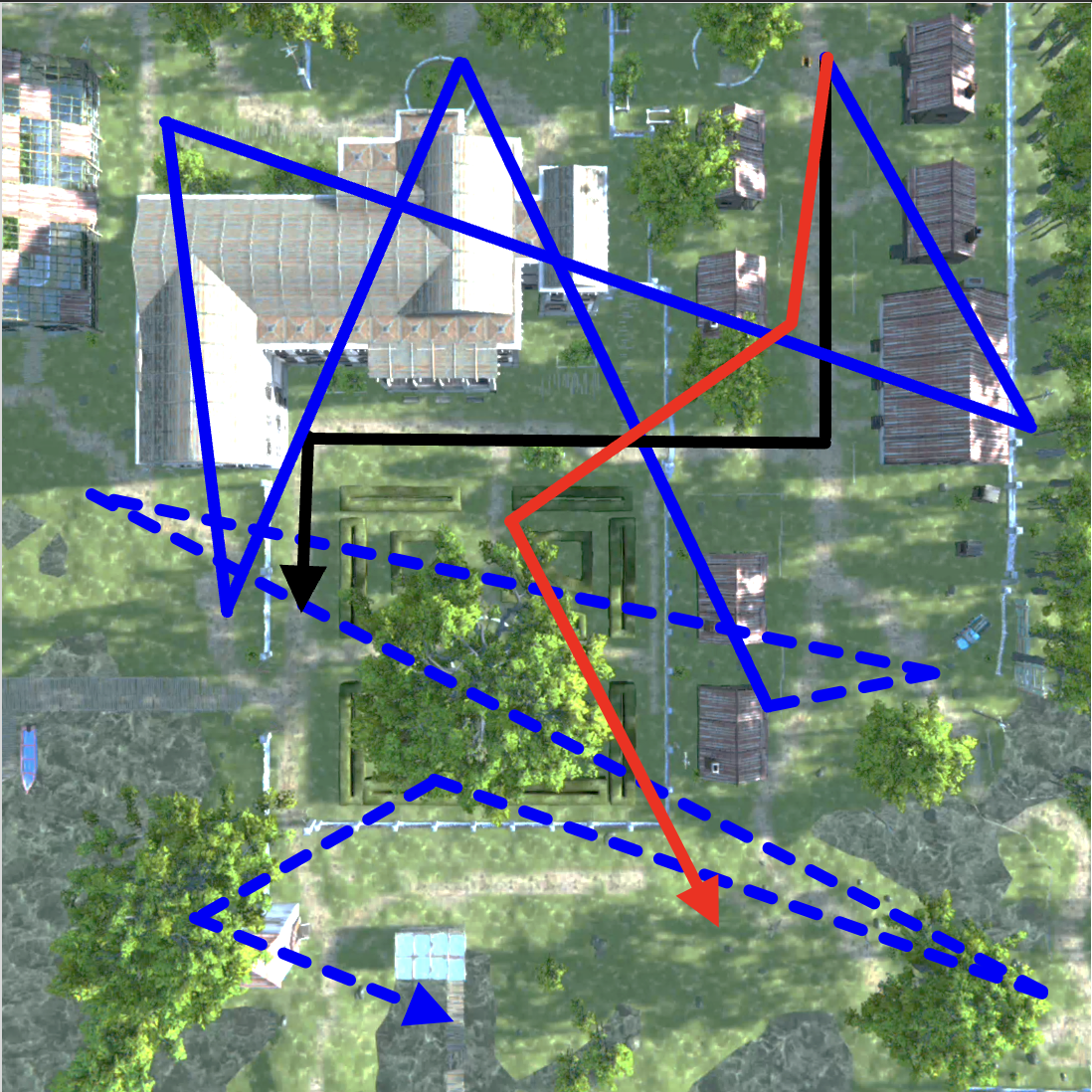}
        \label{fig:vmdm}
    }
    \subfigure[EGS only]{
        \includegraphics[width=0.3\linewidth]{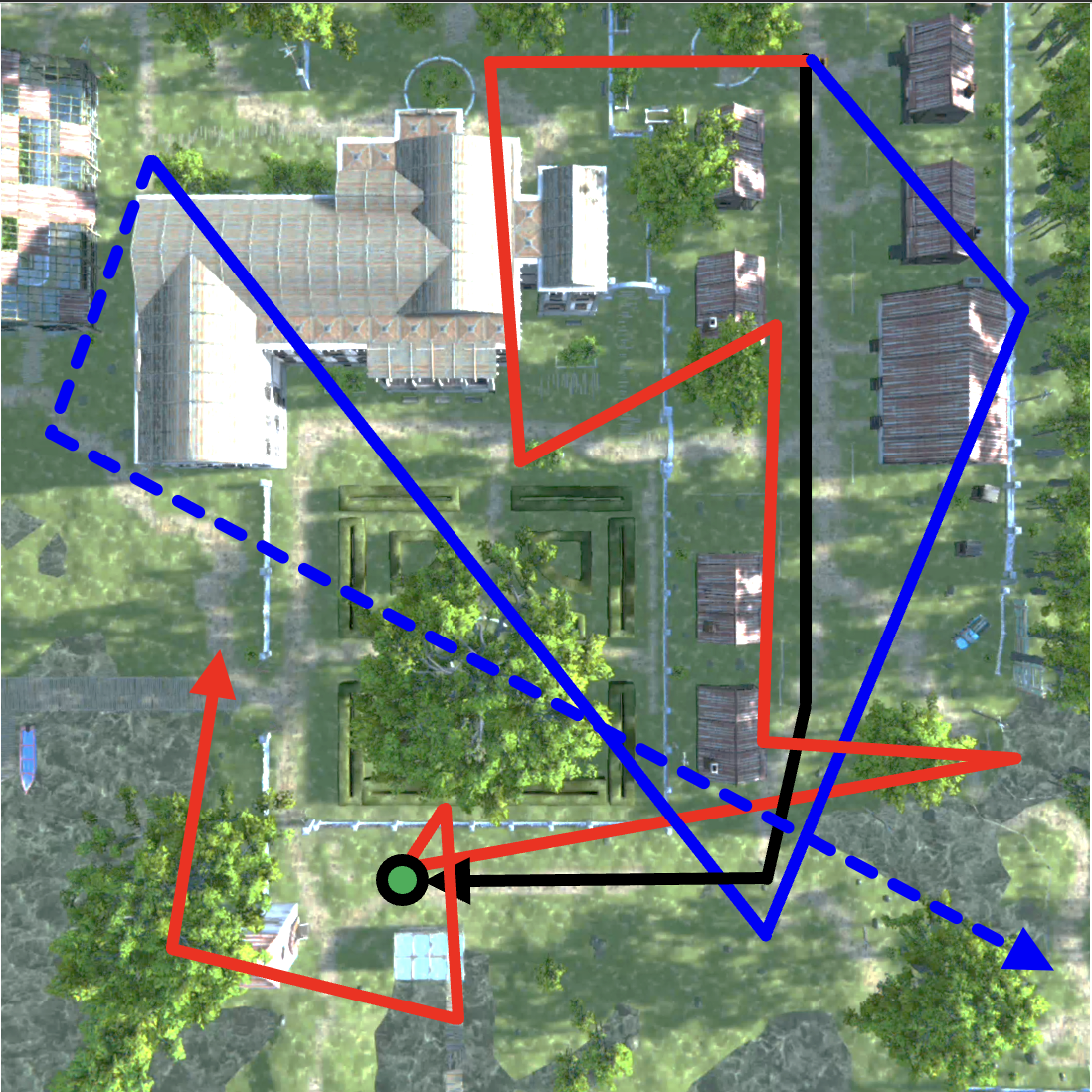}
        \label{fig:emdm}
    }
    \subfigure[Full MEND]{
        \includegraphics[width=0.3\linewidth]{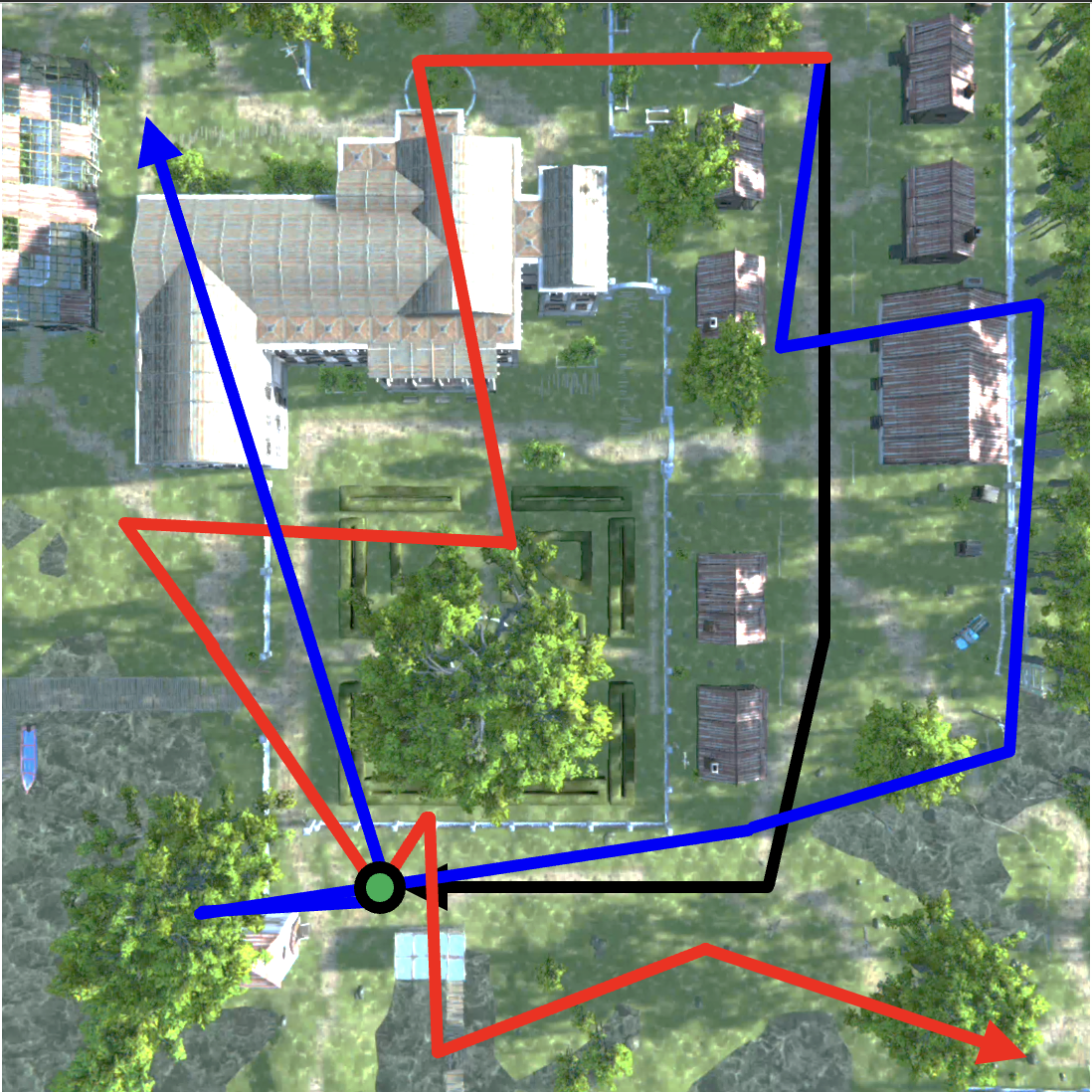}
        \label{fig:mend}
    }
    \vspace{-6pt}
    \caption{Ablation on neuro-symbolic (NS) and entropy-guided scheduling (EGS). Dashed lines denote plans violating operational constraints.}
    \vspace{-10pt}
    \label{fig:comparison}
\end{wrapfigure}

\paragraph{Ablation Study} 
We conduct an ablation study to evaluate the contributions of the neuro-symbolic (NS) module and entropy-guided scheduling (EGS) in MEND, as shown in Figure~\ref{fig:comparison}.
Dashed lines indicate model-predicted plans that violate operational constraints.
Figure~\ref{fig:vmdm} illustrates the variant without NS or EGS, which results in imbalanced UAV workloads and energy violations. Figure~\ref{fig:emdm} shows the variant with EGS but without NS, which improves plan quality yet still fails to satisfy operational constraints.
In contrast, Figure~\ref{fig:mend} shows that the full MEND policy achieves effective collaborative planning while satisfying operational constraints, highlighting the necessity of both NS and EGS.

\section{Limitation}
MEND has two main limitations.
First, MEND plans over a fixed-length masked sequence with a static task set within each planning episode; although replanning can accommodate new tasks, dynamic task arrivals are not explicitly modeled.
Second, while MEND supports decentralized execution via local graph communication, planning relies on joint inference, which potentially limits robustness under communication constraints.
Future work will investigate online task insertion with variable-length masked diffusion models for horizon-agnostic planning and graph-factorized masked diffusion for fully decentralized planning under limited communication.

\section{Conclusion}\label{sec:conclusion}
We propose MEND, a theoretical approach for collaborative planning in heterogeneous robot teams under operational constraints.
MEND formulates collaborative planning as a conditional masked generative process for context-aware planning of both UAVs and UGVs.
It introduces a neuro-symbolic diffusion paradigm that constrains masked diffusion policies to enforce compliance with operational constraints during execution, together with an error-bounded inference scheduling strategy to improve collaborative planning optimality under these constraints.
Extensive experiments in simulation, on real-world maps, and with physical robots demonstrate that MEND advances UAV-UGV collaborative planning under operational constraints. 

\clearpage
\acknowledgments{This work was partially supported by NSF CAREER Award IIS-2308492, DEVCOM ARL TBAM CRA Award W911NF-25-2-0024, DARPA Award N65236-26-9-0001, and NSF Award IIS-2404386.}

\bibliography{example}  %

\end{document}